  \providecommand\BibTeX{{%
    \normalfont B\kern-0.5em{\scshape i\kern-0.25em b}\kern-0.8em\TeX}}}
\newcommand{\cmark}{\ding{51}}%
\newcommand{\xmark}{\ding{55}}%
\newtheorem{theorem}{Theorem}
\theoremstyle{definition}
\newtheorem{definition}{Definition}
\begin{document}

\title{Simple Rule Injection for ComplEx Embeddings}

\author{Haodi Ma}
\email{ma.haodi@ufl.edu}
\affiliation{%
  \institution{University of Florida}
  \city{Gainesville}
  \state{Florida}
  \country{USA}
}

\author{Anthony Colas}
\email{acolas1@ufl.edu}
\affiliation{%
  \institution{University of Florida}
  \city{Gainesville}
  \state{Florida}
  \country{USA}
}

\author{Yuejie Wang}
\email{yw4989@nyu.edu}
\affiliation{%
  \institution{New York University}
  \city{New York}
  \state{New York}
  \country{USA}
}

\author{Ali	Sadeghian}
\email{ali.sadeghian2050@gmail.com}
\affiliation{%
  \institution{University of Florida}
  \city{Gainesville}
  \state{Florida}
  \country{USA}
}

\author{Daisy Zhe Wang}
\email{daisyw@cise.ufl.edu}
\affiliation{%
  \institution{University of Florida}
  \city{Gainesville}
  \state{Florida}
  \country{USA}
}


\begin{abstract}
  Recent works in neural knowledge graph inference attempt to combine logic rules with knowledge graph embeddings to benefit from prior knowledge. However, they usually cannot avoid rule grounding and injecting a diverse set of rules has still not been thoroughly explored. In this work, we propose InjEx, a mechanism to inject multiple types of rules through simple constraints, which capture definite Horn rules. To start, we theoretically prove that InjEx can inject such rules. Next, to demonstrate that InjEx infuses interpretable prior knowledge into the embedding space, we evaluate InjEx on both the knowledge graph completion (KGC) and few-shot knowledge graph completion (FKGC) settings. Our experimental results reveal that InjEx outperforms both baseline KGC models as well as specialized few-shot models while maintaining its scalability and efficiency.
\end{abstract}

\begin{CCSXML}
<ccs2012>
 <concept>
  <concept_id>10010520.10010553.10010562</concept_id>
  <concept_desc>Computing methodologies~Knowledge representation and reasoning</concept_desc>
  <concept_significance>500</concept_significance>
 </concept>
 <concept>
  <concept_id>10010520.10010575.10010755</concept_id>
  <concept_desc>Computer systems organization~Redundancy</concept_desc>
  <concept_significance>300</concept_significance>
 </concept>
 <concept>
  <concept_id>10010520.10010553.10010554</concept_id>
  <concept_desc>Computer systems organization~Robotics</concept_desc>
  <concept_significance>100</concept_significance>
 </concept>
 <concept>
  <concept_id>10003033.10003083.10003095</concept_id>
  <concept_desc>Networks~Network reliability</concept_desc>
  <concept_significance>100</concept_significance>
 </concept>
</ccs2012>
\end{CCSXML}

\ccsdesc[500]{Computing methodologies~Knowledge representation and reasoning}
\ccsdesc[500]{Computing methodologies~Reasoning about belief and knowledge}

\keywords{Knowledge graph embeddings, link prediction, knowledge distillation, knowledge graph}

\maketitle

\section{Introduction}
Knowledge graphs (KGs) are a collection of triples, where each triple represents a relation \textit{r} between the head entity \textit{h} and tail entity \textit{t}. Examples of real-world KGs include Freebase~\cite{bollacker2008freebase}, Yago~\cite{suchanek2008yago} and NELL~\cite{carlson2010toward}. These KGs contain millions of facts and are fundamental basis for applications like question answering, recommender systems and natural language processing. 



The immense amount of information stored in today's large scale KGs can be used to infer new knowledge directly from the KG. Currently there the two prominent approaches to achieve this are representation learning and pattern mining over KGs. \textbf{KG embedding (KGE) models} aim to represent entities and relations as low-dimensional vectors, such that the semantic meaning is captured. This approach has been well studied in the past decade~\cite{bordes2013translating,yang2015embedding, wang2014knowledge, lin2015learning,ji2015knowledge}. However these approaches still lack of interpretability and face challenges with unseen entities.

Other KGE works focus on leveraging more complex triple scoring models~\cite{wang2022dirie} or using meta learning~\cite{chen2019meta, xiong2018one, lv2019adapting, wang2021mulde}, while possible prior knowledge are relatively ignored. Few works that study injecting rules into embeddings do so for a very limited types of rules~\cite{ding2018improving, hu2016harnessing}. We show that one can use simple constraints on the embeddings and objective function to inject a variety of rules types. 

On the other hand, \textbf{KG rule mining} techniques extract information in the form of human understandable logical rules. For example, AMIE~\cite{galarraga2013amie}, DRUM~\cite{sadeghian2019drum}, and AnyBURL~\cite{ijcai2019-435} discover and mine meaningful symbolic rules from the background KGs. Specifically,AMIE~\cite{galarraga2013amie} states composition rules are the most common and important ones among all the rules. Based on those, rule-based works like SERFAN~\cite{ott2021safran} focus on predicting missing links between entities with certain type of rules. A major bottleneck of the currently used approaches is the relatively lower predictive performance compared to KG representation learning methods. 

Recently there has been attempts at combining embedding based and rule based methods to achieve both higher performance and better interpretability, for example ComplEx-NNE~\cite{ding2018improving} and UniKER~\cite{cheng-etal-2021-UniKER}. In addition, rule injection methods provide a natural way of including prior knowledge into representation learning techniques. 
Nevertheless, previous attempts use probabilistic model to approximate the exact logical inference~\citep{qu2019probabilistic, qu2020rnnlogic, zhang2019can} requires grounding of rules which is intractable in large-scale real-world knowledge graphs. Other works treat logical rules as additional constraints into KGE~\citep{guo-etal-2016-jointly, demeester2016lifted} usually deal with a certain type of rules or do not explore the effect between different types of rules.


In this work, we propose InjEx, a novel method of rule injection that improves the reasoning performance and provides the ability of soft injection of prior-knowledge via multiple different types of rules without grounding (see Table~\ref{model-patterns}).
Following the idea of ComplEx-NNE~\cite{ding2018improving}, we propose that with proper constraints on entity and relation embeddings, we are able to handle definite Horn rules with ComplEx as the base model. First, we impose a non-negative bounded constraint on both entity and relations embeddings. Second, we add a simple yet novel regularization to the KG embedding objective that enforces the rules' constraints. As we will explain in more details in Section~\ref{sec:rule-injection}, the former guarantees that the base model is able to inject multiple types of patterns and the latter encodes the connections between relations, which helps the model learn more predictive embeddings. 

To demonstrate the effectiveness InjEx in incorporating prior knowledge, we also evaluate InjEx on multiple widely used benchmarks for link prediction. We show that InjEx achieves, or is on par with, state-of-the-art models on all the benchmarks and multiple evaluation settings.

Our contributions can be summarized as follows:
\begin{itemize}

\item We propose a novel model, InjEx, that allows integration of prior knowledge in the form of definite Horn rules into ComplEx KG embeddings. InjEx only requires minimal modifications of the underlying embedding method which avoids rule grounding and enables soft rule injection.

\item Unlike prior methods~\citep{ding2018improving}, InjEx is not limited to a specific rule structure. We show that InjEx is able to capture definite Horn rules with any body length and empirically outperforms state-of-art models on knowledge graph completion (KGC) task. InjEx improves around 5\% in Hits@10 on both FB15k-237 and YAGO3-10 comparing with its base model ComplEx. We also show the effect of the combination or separately injecting multi-length definite Horn rules on KGC tasks.

\item We show that, on few-shot link prediction task, InjEx still successfully captures the injected prior knowledge and is able to achieve competitive performance against other baselines on such task. 
\end{itemize}

\begin{table}[!h]
  \caption{Capabilities of different methods.}
  \label{model-patterns}
  \centering
  \begin{tabular}{lccc}
    \toprule
            & \makecell[c]{Multi-length\\definite Horn rules}   & \makecell[c]{Soft rule\\ injection} & \makecell[c]{Avoid\\Grounding}\\
    \midrule
    \ Demeester et al.\cite{demeester2016lifted} & \xmark  & \xmark & \cmark  \\
    \ ComplEx-NNE\cite{ding2018improving}   & \xmark  & \cmark & \cmark  \\
    \midrule
    \ KALE\cite{guo-etal-2016-jointly}  & \cmark  & \xmark & \xmark \\
    \ RUGE\cite{guo2018knowledge}  & \cmark  & \cmark & \xmark \\
    \midrule
    \ RNNLogic\cite{qu2020rnnlogic}  & \cmark  & \cmark &\xmark \\
    \ pLogicNet\cite{qu2019probabilistic}  & \cmark  & \cmark &\xmark \\
    \midrule
    \ UniKER\cite{cheng-etal-2021-UniKER}  & \cmark  & \cmark &\xmark \\
    \ InjEx  & \cmark  & \cmark &\cmark \\
    \bottomrule
  \end{tabular}
\end{table}

\section{Related work}
In this section, we give an overview of embedding models for KGC and few-shot link prediction, rule mining systems and constraint/rule assisted link prediction models.

\subsection{Knowledge Graph Embedding (KGE) Models}
KG embedding models can be generally classified into translation and bilinear models. The representative of translation models is TransE~\citep{bordes2013translating}, which models the relations between entities as the difference between their embeddings. This method is effective in inferencing composition, anti-symmetry and inversion patterns, but can't handle the 1-to-N, N-to-1 and N-N relations. RotatE~\citep{sun2018rotate} models relations as rotations in complex space so that symmetric relations can be captured, but is as limited as TransE otherwise. 
Other models such as BoxE~\citep{abboud2020boxe}, HAKE~\citep{zhang2020learning} and DiriE~\citep{wang2022dirie} are able to express multiple types of relation patterns with complex KG embeddings. MulDE~\citep{wang2021mulde} proposes to transfer knowledge from multiple teacher KGE models to perform better on KGC tasks. ComplEx~\citep{trouillon2016complex}, as a representative of bilinear models, introduces a diagonal matrix with complex numbers to capture anti-symmetry. 

\subsection{Few-shot Knowledge Graph Completion}
Few-shot KGC refers to the scenario where only a limited number of instances are provided for certain relations or entities, which means the model needs to leverage knowledge about other relations or entities to predict the missing link. In this work we are mainly concerned about few-shot learning for new relations\cite{chen2019meta, bose2019meta, jambor2021exploring}. One of the early works in this direction~\cite{xiong2018one}, leverages a neighbor encoder to learn entity embeddings and a matching component to find similar reference triples to the query triple. 

\subsection{Rule-based Models}
Previous works such as AMIE~\cite{galarraga2013amie}, DRUM~\cite{sadeghian2019drum}, mine logical rules to predict novel links in KGs.SAFRAN~\cite{ott2021safran} and AnyBURL~\cite{ijcai2019-435} share a similar method to predict links with logical rules. Similar to InjEx, ComplE-NNE\cite{ding2018improving} approximately applies entailment patterns as constraints on relation representations to improve the KG embeddings. However, it can only inject entailment rules. Another way is to augment knowledge graphs with grounding of rules which is less efficient for large scale KGs. Representatives like RUGE~\cite{guo2018knowledge} and KALE~\cite{guo-etal-2016-jointly} treats logical rules as additional regularization by computing satisfaction score to sample ground rules.  IterE~\cite{zhang2019iteratively} [98] proposes an iterative training strategy with three components of embedding learning, axiom induction, and axiom injection, targeting at sparse entity reasoning.
UniKER~\cite{cheng-etal-2021-UniKER} augments triplets from relation path rules to improve embedding quality. But to avoid data noise, it uses only small number of relation path rules and requires multiple passes of augmenting data and model training. 

\subsection{Graph Neural Network Models}
The Graph Neural Network (GNN) has gained wide attention on KGC tasks in recent years~\cite{wang2021mixed, zhang2022rethinking, yu2021knowledge}. With the high expressiveness of GNNs, these methods have shown promising performance. However, SOTA GNN-based models do not show great advantages compared with KGE models while introducing additional computational complexity~\cite{zhang2022rethinking}. For example, 
RED-GNN~\cite{zhang2022knowledge} achieves competitive performance on KGC benchmarks, but the leverage of the Bellman-Ford algorithm which needs to propagate through the whole knowledge graph, which restrict their application on large graphs. Several methods including pLogicNet~\citep{qu2019probabilistic} proposes to using Markov Logic Network (MLN) to compute variational distribution over possible hidden triples for logic reasoning. RNNLogic~\cite{qu2020rnnlogic} learns logic rules for knowledge graph reasoning with EM-based algorithm in reinforcement learning.

\begin{figure*}[t]
\centering
\includegraphics[width=0.75\textwidth]{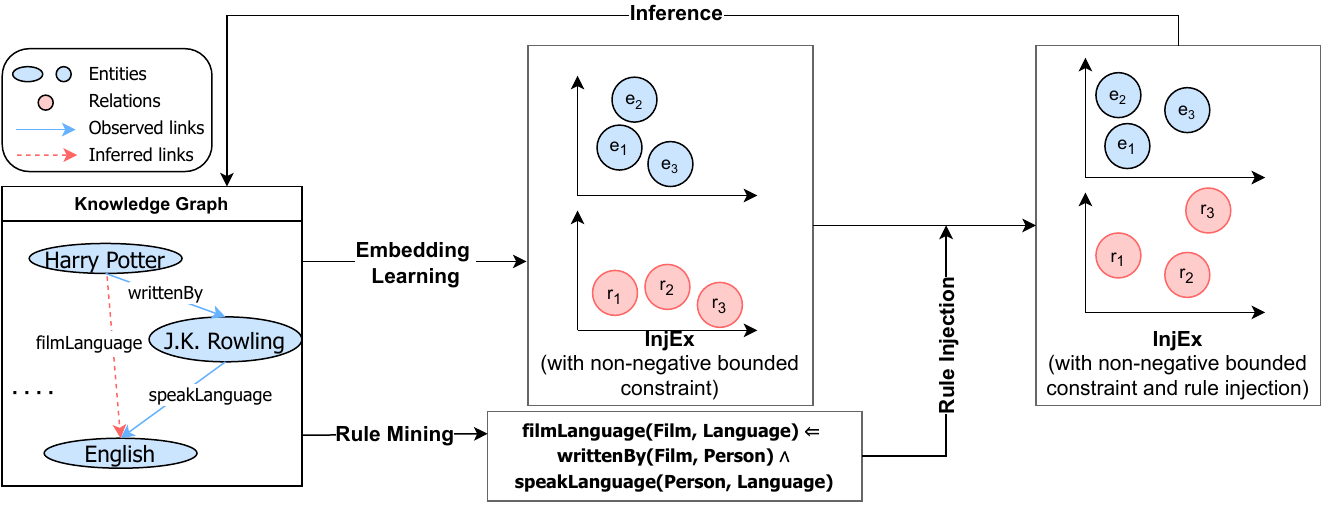}
\caption{Overview of the InjEx model. Note, we illustrate injecting a composition rule.}
\label{fig:framework}
\end{figure*}

\section{PRELIMINARIES} 
\subsection{Knowledge Graphs (KGs)}
Let $\mathcal{E}$ and $\mathcal{R}$ denote the set of entities and relations, a knowledge
graph $\mathcal{G} = \{(e_i, r_k, e_j)\} \subset \mathcal{E} \times \mathcal{R}\times \mathcal{E}$ is a collection of factual triples, where $e_i$ and $r_k$ are the $i$-th entity and $k$-th relation, respectively. We usually refer $e_i$ and $e_j$ as the head and tail entity. 
A knowledge graph can also be represented as $\mathcal{X} \in \{0, 1\}^{|\mathcal{E}| \times |\mathcal{R}| \times |\mathcal{E}|}$, which is called the adjacancy tensor of $\mathcal{G}$. The $(i, j, k)$ entry $\mathcal{X}_{i, k, j} = 1$ when triple $(e_i, r_k, e_j)$ is true, otherwise $\mathcal{X}_{i, k, j} = 0$. 
\subsection{Knowledge Graph Completion (KGC) and ComplEx}
\subsubsection{\textbf{Knowledge Graph Completion}}
The objective of KGC is to predict valid but unobserved triples in $\mathcal{G}$. Formally, given a head entity $e_i$ (tail entity $e_j$) with a relation $r_k$, models are expected to find the tail entity $e_j$ (head entity $e_i$) to form the most plausible triple $(e_i, r_k, e_j)$ in $\mathcal{G}$. KGC models usually define a scoring function $f: \mathcal{E} \times \mathcal{R}\times \mathcal{E} \rightarrow \mathbb{R}$ to assign a score $s(e_i, r_k, e_j)$ to each triple $(e_i, r_k, e_j) \in \mathcal{E} \times \mathcal{R}\times \mathcal{E}$ which indicates the plausibility of the triple. 

KGE models usually associate each entity $e_i$ and relation $r_j$ with vector representations $\textbf{e}_i$, $\textbf{r}_j$ in the embedding space. Then they define a scoring function to model the interactions among entities and relations. We review ComplEx, which is used as our base model.
\subsubsection{\textbf{ComplEx}} \label{sec:basemodel}
ComplEx~\cite{trouillon2016complex} models entities $\emph{e} \in \mathcal{E}$ and relations $\emph{r} \in \mathcal{R}$  as complex-valued vectors $\textbf{e},\textbf{r} \in \mathbb{C}^d$ where d is the embedding space dimension. For each triple $(e_i, r_k, r_j) \in \mathcal{E} \times \mathcal{R}\times \mathcal{E} $, the scoring function is defined as:

\begin{align}
    \label{eq:complex-score}
    \phi(e_i, r_k, e_j) &:= Re(\langle \textbf{e}_i, \textbf{r}_k, \overline{\textbf{e}}_j \rangle) \nonumber\\
    &:= Re(\sum_l [\textbf{e}_i]_l[\textbf{r}_k]_l[\overline{\textbf{e}}_j]_l)
\end{align}
where $\textbf{e}_i, \textbf{r}_k, \textbf{e}_j \in \mathbb{C}^d$ are the vector representations associated with $e_i, r_k, r_j$ and $\overline{\textbf{e}}_j$ is the conjugate of $\textbf{e}_j$. Triples with higher $\phi(\cdot, \cdot, \cdot)$ scores are more likely to be true. We use $\langle \textbf{a}, \textbf{b}, \textbf{c} \rangle$ or $\textbf{a} \times \textbf{b}$ for element-wise vector multiplication. For a complex number $x \in \mathbb{C}$, $Re(\cdot), Im(\cdot)$ means taking the real, imaginary components of $x$.

\subsection{Definite Horn Rules} \label{sec:horn rule}
Horn rules, as a popular subset of first-order logic rules, can be automatically extracted from recent rule mining systems, such as AMIE~\cite{galarraga2013amie} and AnyBurl~\cite{ijcai2019-435}. Length $k$ Horn rules are usually written in the form of implication as below:
\begin{align*}
\forall x, y, r_1(x, z_1) \land r_2(z_1, z_2) \land ... r_k(z_{k-1}, y) \Rightarrow r(x, y) 
\end{align*}
where $r(x, y)$ is called the head of the rule and $r_1(x, z_1) \land r_2(z_1, z_2) \land ... r_k(z_{k-1}, y)$ is the body of the rule. In KGs, a ground Horn rule is then represented as:
\begin{align*}
    r_1(e_i, e_{p_1}) \land r_2(e_{p_1}, e_{p_2}) \land ... r_k(e_{p_{k-1}}, e_j) \Rightarrow r(e_i, e_j) 
\end{align*}
For convenience, we define length-1 and length-2 definite Horn rules as:
\begin{definition}[Hierarchy rule]
\label{def:hierarchy rule}
A \textbf{hierarchy} rule holds between relation $r_1$ and $r_2$ if $\forall e_i, e_j$
\begin{center}
    $r_1(e_i, e_j) \Rightarrow r_2(e_i, e_j)$
\end{center}

\end{definition}
\begin{definition}[Composition rule]
\label{def:composition rule}
A \textbf{composition} rule holds between relation $r_1$, $r_2$ and $r_3$ if $\forall e_i, e_j, e_k$
\begin{center}
    $r_1(e_i, e_j) \land r_2(e_j, e_k) \Rightarrow r_3(e_i, e_k)$
\end{center}
\end{definition}

\section{METHEDOLOGY} \label{sec:model}
Here, we first introduce how the non-negative constraint over both entity and relation embeddings help make the model fully expressive (\ref{sec:NNE}). Then we discuss how we integrate Horn rules into the base model as a rule based regularization (\ref{sec:rule-injection}). Figure~\ref{fig:framework} illustrates an overview of the InjEx model for composition rules.

\subsection{Non-negative Bounded Constraint on Entity and Relation Representations} \label{sec:NNE}
We modify ComplEx further by requiring both entity and relation representations to be non-negative and bounded. More formally we require entity and relation embeddings to satisfy:
\begin{align}
    \label{eq:boundedNNE}
    & 0 \leq Re(\textbf{e}), Im(\textbf{e}) \leq 1, \forall e \in \mathcal{E}\nonumber\\
    & 0 \leq Re(\textbf{r}), Im(\textbf{r}), 0 \leq |\textbf{r}| \leq R, \forall r \in \mathcal{R}
\end{align}
where $R \in \mathbb{R}$ is a selected upper bound for the norm of relation representations. 

As pointed out by~\cite{murphy2012learning}, the positive elements of entities or relations usually contain enough information. Thus, intuitionally we don't expect these constraints to significantly effect the performance of ComplEx. While they provide additional benefits such as: i) guarantee that the original scoring function of~\cite{trouillon2016complex} is bounded. ii) as we demonstrate in theorem~\ref{thm:composition}, with the constraints above ComplEx can now also infer composition patterns.
\subsection{Rule Injection} \label{sec:rule-injection}
In this section, we discuss how Horn rules are integrated into ComplEx as a rule based regularization. We start with the injection of composition rules, extending the discussion to length-k Horn rules. 

We treat each dimension of an entity as a separate attribute. Recall the equation~\ref{eq:complex-score}, for the simplicity of notations, we decompose
\begin{align}
\label{eq:attr_score}
    \phi(e_i,r_k,e_j) &=: \Sigma_l(\varphi_l(e_i,r_k,e_j)) \nonumber\\
    \varphi_l(e_i,r_k,e_j) &:= Re([e_i]_l[r_k]_l[\bar{e}_j]_l) \\
    &= (|[e_i]_l||[r_k]_l||[e_j]_l|\cos{(\theta_{[r_k]_l} + \theta_{[e_i]_l} - \theta_{[e_j]_l})}) \nonumber
\end{align}
s.t. $\varphi_l(e_i,r_k,e_j)$ denotes the score of dimension $l$ in relation $r_k$.

\vspace{2mm}
\noindent\textbf{Composition Rule Injection. }
We now introduce how we inject composition rules. 
As in definition~\ref{def:composition rule}, composition rules express one type of relation between three relations. For example, $\texttt{BornIn}(x, y) \land \texttt{CountryOfOrigin}(y, z) \rightarrow \texttt{Nationality}(x, z)$ indicates The country of the city which a person is born in should usually be the nationality of that person. 
We denote such composition rules with confidence level $\lambda$ as 
\begin{align*}
r_1(x, y) \land r_2(y, z) \xrightarrow{\lambda} r_3(x, z)
\end{align*}

As defined in equation~\ref{eq:attr_score}, given the range constraints, we have $\varphi_l(x,r,y) / (2R) \in [0,1]$, so we can treat $\varphi_{l} / (2R)$ as the probability that the relation holds for the triple's attribute $l$.

Under such assumption, we now propose we can roughly model a strict composition rule $r_1(e_i, e_j) \land r_2(e_j, e_k) \xrightarrow{\infty} r_3(e_i, e_k)$ as follows. 
\begin{theorem}
\label{thm:composition}
A strict composition rule $r_1(e_i, e_j) \land r_2(e_j, e_k) \xrightarrow{\infty} r_3(e_i, e_k)$ holds if the following condition is satisfied. 
\begin{align} 
\label{eq:composition rule score}
    \varphi_l(e_i, r_1, e_j) * \varphi_l(e_j, r_2, e_k) / (2R) \leq \varphi_l(e_i, r_3, e_k) \nonumber \\
    \forall e_i, e_j, e_k \in \mathcal{E}, \forall 1 \leq l \leq d
\end{align}
where d is the dimension of $e$'s and $r$'s, R is the range of all relation embeddings as defined in Section~\ref{sec:NNE}. (See proof in Appendix \ref{prf: constraint proof})
\end{theorem}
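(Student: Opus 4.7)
The plan is to interpret the per-dimension score $\varphi_l$ as a probability and then turn the strict composition rule into a probabilistic inequality that reduces, dimension by dimension, to the displayed condition.

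First I would verify that under the non-negative bounded constraints of Section~\ref{sec:NNE} the quantity $\varphi_l(e_i,r_k,e_j)/(2R)$ indeed lies in $[0,1]$. Using $[e]_l,[e']_l$ with real and imaginary parts in $[0,1]$ and $|[r]_l|\le R$, the triple product $[e_i]_l[r_k]_l[\bar e_j]_l$ is bounded in modulus by $2R$, and non-negativity follows because $Re$, $Im$ of $e$'s and $r$'s are all nonnegative (so the complex numbers sit in the first quadrant, forcing the real part of their product to be nonnegative, a fact already used in the hierarchy / entailment argument of ComplEx-NNE). Hence $p_l(r,e,e'):=\varphi_l(e,r,e')/(2R)\in[0,1]$ is a well-defined per-attribute ``probability'' that $r$ holds on $(e,e')$ along dimension~$l$.

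Next I would translate the strict composition rule into this probabilistic language. A strict (confidence-$\infty$) rule $r_1(e_i,e_j)\land r_2(e_j,e_k)\Rightarrow r_3(e_i,e_k)$ is, in any probabilistic semantics, the statement
\[
\Pr\bigl(r_1(e_i,e_j)\land r_2(e_j,e_k)\bigr)\;\le\;\Pr\bigl(r_3(e_i,e_k)\bigr).
\]
Treating the per-dimension events as independent across the two body atoms (the standard attribute-independence assumption already implicit in decomposing $\phi$ as a sum $\sum_l\varphi_l$ with one latent attribute per coordinate), the joint probability factorises into $p_l(r_1,e_i,e_j)\cdot p_l(r_2,e_j,e_k)$. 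Requiring the entailment to hold dimension-wise then gives
\[
p_l(r_1,e_i,e_j)\cdot p_l(r_2,e_j,e_k)\;\le\; p_l(r_3,e_i,e_k),\qquad \forall\,e_i,e_j,e_k,\ \forall\, 1\le l\le d.
\]
Substituting $p_l=\varphi_l/(2R)$ and multiplying both sides by $2R$ yields exactly the stated inequality~\eqref{eq:composition rule score}.

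The routine part is the boundedness calculation in Step~1; the main conceptual obstacle, which I would spell out carefully, is justifying the per-dimension probabilistic reading: namely that (i) $\varphi_l/(2R)$ is a valid probability and (ii) attribute-wise independence is the correct factorisation. I would argue (i) from the sign constraints together with $|r|\le R$, and (ii) by appealing to the diagonal multiplicative form of the ComplEx score, where each latent coordinate is an independent bilinear interaction and no cross-dimensional terms appear. Once those two points are secured, the rest is a one-line rearrangement, and the same scheme will extend by induction to the length-$k$ case used in Section~\ref{sec:rule-injection}.
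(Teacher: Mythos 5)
Your proposal follows essentially the same route as the paper's own proof in Appendix~\ref{prf: constraint proof}: interpret each per-dimension score $\varphi_l/(2R)$ as the probability that the relation holds on attribute $l$, factorise the body of the rule as a product of these per-attribute probabilities, and read the displayed inequality as $\Pr(\text{body}) \le \Pr(\text{head})$ dimension by dimension. (The appendix additionally runs a two-step argument through an auxiliary relation $r := \langle r_1, r_2\rangle/R$, but that part serves the follow-up claim that the relation-level constraint of Equation~\ref{eq:composition rules} implies the per-dimension condition; the theorem itself rests on exactly the probabilistic reading you give.)

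One concrete flaw in your step (i): non-negativity of $\varphi_l$ does not follow from the sign constraints as you assert. Each component of $\textbf{e}_i$ and $\textbf{r}$ lies in the first quadrant and each component of $\overline{\textbf{e}}_j$ in the fourth, so the argument of the product $[\textbf{e}_i]_l[\textbf{r}]_l[\overline{\textbf{e}}_j]_l$ ranges over $[-\pi/2,\pi]$ and its real part can be negative: taking the $l$-th components of $\textbf{e}_i$ and $\textbf{r}$ to be the imaginary unit and that of $\textbf{e}_j$ to be $1$ gives $\varphi_l = -1$. A signed quantity in $[-1,1]$ is not a probability, so your justification of the probabilistic reading is incomplete; the paper quietly sidesteps the same issue by working with $|\varphi_l|$ throughout the appendix, and your argument needs the same fix (or an explicit restriction to triples scored as true, where the phase terms vanish). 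With that repair the rest of your derivation, including the extension to length-$k$ rules, matches the paper.
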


Note that with the non-negative bounded constraint still, a sufficient condition for Equation~\ref{eq:composition rule score} to hold, is to further impose
\begin{align}
\label{eq:composition rules}
    & Re(\textbf{r}_1 \times \textbf{r}_2) / R \leq Re(\textbf{r}_3), \nonumber \\
    & Im(\textbf{r}_1 \times \textbf{r}_2) / R = Im(\textbf{r}_3)
\end{align}
where $\textbf{r}_1, \textbf{r}_2, \textbf{r}_3$ are the vectorized representation of $\textit{r}_1$, $\textit{r}_2$ and $\textit{r}_3$. We further introduce the level of confidence and slackness in Equation~\ref{eq:composition rules} to model approximate composition rules, which yields
\begin{align}
\label{eq:composition rules-confidence term}
    & \lambda (Re(\textbf{r}_1 \times \textbf{r}_2) / R^2 - Re(\textbf{r}_3) / R) \leq \alpha, \nonumber \\
    & \lambda (Im(\textbf{r}_1 \times \textbf{r}_2) / R^2 - Im(\textbf{r}_3) / R)^2 \leq \beta
\end{align}

\vspace{2mm}
\noindent\textbf{Definite Horn Rule Injection. }
We now introduce how we inject Horn rules based on previous observations. 
As defined in Sec\ref{sec:horn rule}, length-k Horn rules represent one type of relationship between $k + 1$ relations. For example, $\texttt{FilmWrittenBy}(x, y) \rightarrow \texttt{FilmDirectedBy}(x, y)$ indicates that a person who is the writer of a movie should probably also direct that movie; $ActorOfFilm(x, z) \land nationality(z, y) \rightarrow FilmReleaseRegion(x, y)$ shows a more complex relation between the actor, her nationality and the region where the film is released. We further denote Horn rule with confidence level $\lambda$ as
\begin{align*}
    r_1(x, z_1) \land r_2(z_1, z_2) \land ... \land &r_k(z_{k-1}, y) \xrightarrow{\lambda} r(x, y) \nonumber \\
    or \ r_1 \land r_2 \land ... \land &r_k \xrightarrow{\lambda} r
\end{align*}

With the same assumption as in composition rule injection, we can further model a strict Horn rule $r_1(x, z_1) \land r_2(z_1, z_2) \land ... \land r_k(z_{k-1}, y) \xrightarrow{\infty} r(x, y)$ as follows:
\begin{theorem}
\label{thm:Horn}
A strict Horn rule $$r_1(x, z_1) \land r_2(z_1, z_2) \land ... \land r_k(z_{k-1}, y) \xrightarrow{\infty} r(x, y)$$ holds if the following condition is satisfied: 
\begin{align} 
\label{eq:Horn rule score}
    \Pi_{i=1}^k(\varphi_{i, l}(z_{i-1}, r_i, z_i)/R) \leq (\varphi_l(x, r, y)/R), \nonumber
    \\ \forall e_i, e_j, e_{p_1}, ..., e_{p_{k-1}} \in \mathcal{E}, \forall 1 \leq l \leq d,
\end{align}
where $x = z_0$, $y = z_k$, R is still the range of all relation embeddings as defined in Section~\ref{sec:NNE}. 
\end{theorem}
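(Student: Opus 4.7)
My plan is to establish Theorem~\ref{thm:Horn} by induction on the body length $k$, using Theorem~\ref{thm:composition} as both the base case (via $k=2$) and as the glue for the inductive step. The two smallest cases are quick to dispose of: for $k=1$ the hypothesis $\varphi_{1,l}(x,r_1,y)/R \le \varphi_l(x,r,y)/R$ is exactly the per-attribute hierarchy statement $r_1 \Rightarrow r$ and so the entailment is immediate, and for $k=2$ the assumed inequality matches the composition condition (up to a cosmetic $R$ versus $2R$ normalization) whose proof is already given in Appendix~\ref{prf: constraint proof}.

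For the inductive step, I would assume the theorem for all body lengths up to $k-1$ and consider a length-$k$ body $r_1(x,z_1)\wedge\cdots\wedge r_k(z_{k-1},y)$. The key algebraic move is to split the product at the last atom,
\begin{align*}
\prod_{i=1}^{k}\frac{\varphi_{i,l}(z_{i-1},r_i,z_i)}{R}
=\left(\prod_{i=1}^{k-1}\frac{\varphi_{i,l}(z_{i-1},r_i,z_i)}{R}\right)\cdot\frac{\varphi_{k,l}(z_{k-1},r_k,y)}{R},
\end{align*}
introduce a virtual composite relation $r'$ on $(x,z_{k-1})$ whose per-attribute score realises the first factor (scaled by $R$), and invoke the induction hypothesis on the first $k-1$ atoms to obtain $r_1\wedge\cdots\wedge r_{k-1}\Rightarrow r'$. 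The assumed length-$k$ inequality then collapses to the length-$2$ composition inequality
\begin{align*}
\frac{\varphi_l(x,r',z_{k-1})}{R}\cdot\frac{\varphi_{k,l}(z_{k-1},r_k,y)}{R}\le\frac{\varphi_l(x,r,y)}{R},
\end{align*}
so Theorem~\ref{thm:composition} yields $r'\wedge r_k \Rightarrow r$; chaining the two entailments closes the induction with $r_1\wedge\cdots\wedge r_k\Rightarrow r$.

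The main obstacle will be legitimising the virtual intermediate relation $r'$: one must verify that an embedding realising the prescribed per-attribute scores respects the non-negative bounded constraint of Section~\ref{sec:NNE} and does not smuggle in ground facts beyond those supported by $r_1,\ldots,r_{k-1}$. If this abstraction turns out to be awkward, the cleaner fallback is to avoid $r'$ altogether and stay at the attribute-score level, using the probabilistic reading already invoked in the paper: treat $\varphi_{i,l}/(2R)\in[0,1]$ as the probability that the $i$-th body atom holds on attribute $l$, read $\prod_i \varphi_{i,l}/R$ as the (rescaled) joint satisfaction probability of the body, and note that the hypothesised inequality forces the head probability $\varphi_l(x,r,y)/R$ to dominate it, so whenever the body is fully saturated on every attribute the head is saturated as well, which is precisely the strict Horn entailment.
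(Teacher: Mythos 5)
Your primary plan (induction on $k$, splitting off the last atom and gluing with Theorem~\ref{thm:composition}) has a concrete gap at the point you yourself flag: the virtual intermediate relation $r'$. You ask for an embedding of $r'$ whose per-attribute score $\varphi_l(x,r',z_{k-1})$ \emph{realises} the partial product $R\prod_{i=1}^{k-1}\varphi_{i,l}/R$. But $\varphi_l(x,r',z_{k-1})=\mathrm{Re}([x]_l[r']_l[\bar{z}_{k-1}]_l)$ depends on the entity embeddings and their phases, so no single choice of $[r']_l$ can match a prescribed product of scores for all groundings; at best you can choose $r'$ so that the partial product is \emph{bounded above} by $\varphi_l(x,r',z_{k-1})$, which is an inequality in the right direction but not the equality your "collapses to the length-2 case" step assumes. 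The paper's proof avoids this entirely by skipping the induction: it forms the single composite $\hat{r}:=R\,\Pi_{i=1}^k(r_i/R)$ (element-wise product over the whole body at once), assumes the body atoms hold so that every phase $\theta_{r_{i,l}}+\theta_{z_{i-1,l}}-\theta_{z_{i,l}}$ vanishes, uses $|z_{i,l}|\le 1$ to absorb the intermediate entity norms and get $\Pi_{i=1}^k(\varphi_{i,l}/R)\le\hat{\varphi}_l/R$, and then invokes the entailment constraint of~\cite{ding2018improving} on the pair $(\hat{r},r)$ to conclude $\hat{\varphi}_l/R\le\varphi_l/R$.

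Your fallback paragraph -- staying at the attribute-score level and reading $\varphi_{i,l}/(2R)$ as a per-attribute satisfaction probability so that the hypothesised inequality forces the head to be saturated whenever the body is -- is essentially the paper's argument in spirit, and with the $\hat{r}$ construction above it becomes rigorous. If you want to keep the inductive organisation, the repair is to define $r'$ as $R\,\Pi_{i=1}^{k-1}(r_i/R)$ and carry an inequality (not an equality) through the induction hypothesis; you would also need to check that $r'$ satisfies the non-negative bounded constraint of Section~\ref{sec:NNE}, which it does since each $|r_{i,l}|/R\le 1$ and the real and imaginary parts of the $r_i$ are non-negative only up to the phase-addition caveat. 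At that point the induction buys you nothing over the one-shot construction, which is why the paper does it directly.
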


\begin{proof}

If we have triples $(z_{i-1}, r_i, z_i)$ for $1 \leq i \leq k$,
\begin{align*}
    \varphi_{i,l} &:= Re(z_{i-1,l}*r_{i,l}*\bar{z}_{i,l}) \\
    &= |r_{i,l}||z_{i-1,l}||z_{i,l}|\cos{(\theta_{r_{i,l}} + \theta_{z_{i-1,l}} - \theta_{z_{i,l}})} \\
    \phi_{i} &:= \Sigma_l(\varphi_{i,l})
\end{align*}

If we have $rule(r_1 \land r_2 \land ... \land r_k \longrightarrow r)$ for triples $(x, r, y)$ s.t. $x = z_0$ and $y = z_k$. Let $\Pi$ be the element-wise multiplication, let

\begin{align}
\label{def_hat_r}
    \hat{r} &:= R * \Pi_{i=1}^k (r_i / R) \in (0,R] \\
    \hat{\varphi}_l &:= Re(z_{0,l}*\hat{r}_l*z_{k,l}) \nonumber\\
\label{def_hat_phi}
    &= |\hat{r}_l||z_{0,l}||z_{k,l}|\cos{(\theta_{\hat{r}_l} + \theta_{z_{0,l}} - \theta_{z_{k,l}})}
\end{align}

By definition,
\begin{align*}
    \varphi_l &:= Re(x_l*r_l*\bar{y}_l) \\
    \phi &:= \Sigma_l(\varphi_l)
\end{align*}
let $r$ satisfies that (with confidence level $\lambda$):
\begin{align}
\label{def_r}
    &\lambda(Re(\hat{r}) / R - Re(r) / R) \leq \alpha \nonumber\\
    &\lambda(Im(\hat{r}) / R - Im(r) / R) \leq \beta
\end{align}

For each relation $(x', r', y')$ and $1 \leq l \leq n$, where n is the dimension of the vectors, we have $\varphi_{r'_l} / (2R) \in [0,1]$, so we can treat $\varphi_{r'_l} / (2R)$ as the probability that the relation holds for the triple's attribute $l$. We imply $rule(r_1 \lor r_2 \lor ... \lor r_k \longrightarrow r)$ by enforcing that on each dimension $l$, $\Pi_{i=1}^k(\varphi_{i,l} / (2R)) \leq \varphi_l / (2R)$. 

Now we prove that our definitions above can imply the rule. For each $1 \leq i \leq k$, relation holds for triple $(z_{i-1}, r_i, z_i)$, so $\varphi_i$ is maximized on each dimension. So we have for each $1 \leq l \leq n$, $$(\theta_{r_{i,l}} + \theta_{z_{i-1,l}} - \theta_{z_{i,l}}) = 0$$
Therefore,
\begin{align*}
    \Pi_{i=1}^k(\varphi_{i,l} / (2R)) &= \Pi_{i=1}^k((|r_{i,l}| / R)|z_{i-1,l}||z_{i,l}|/2) \\
    &= \Pi_{i=1}^k(|r_{i,l}| / R) * |z_{0,l}| * \Pi_{i=1}^{k-1}(|z_{i,l}|^2/2) * |z_{k,l}|/2
\end{align*}
Notice that by the definition of (\ref{def_hat_r}) and (\ref{def_hat_phi}),
\begin{align*}
    &\theta_{\hat{r}_l} = \Sigma_{i=1}^k\theta_{r_{i,l}} \\
    &|\hat{r}_l| / R = \Pi_{i=1}^k(|r_{i,l}| / R) \\
    &\theta_{\hat{r}_l} + \theta_{z_{0,l}} - \theta_{z_{k,l}} = \Sigma_{i=1}^k(\theta_{r_{i,l}} + \theta_{z_{i-1,l}} - \theta_{z_{i,l}}) = 0
\end{align*}
\begin{align*}
    \Longrightarrow \hat{\varphi}_l / R &= (|\hat{r}_l| / R)|z_{0,l}||z_{k,l}|\cos{(\theta_{\hat{r}_l} + \theta_{z_{0,l}} - \theta_{z_{k,l}})} \\
    &= \Pi_{i=1}^k(|r_{i,l}| / R)|z_{0,l}||z_{k,l}|
\end{align*}
Given that $|z_{i,l}| \leq 1$,
$$|z_{0,l}| * \Pi_{i=1}^{k-1}(|z_{i,l}|^2) * |z_{k,l}| \leq |z_{0,l}||z_{k,l}|$$
So we have $$\Pi_{i=1}^k(\varphi_{i,l} / R) \leq \hat{\varphi}_l / R$$ According to \cite{ding2018improving}, our choice of $r$ in (\ref{def_r}) guarantees that $$\Pi_{i=1}^k(\varphi_{i,l} / R) \leq \hat{\varphi}_l / R \leq \varphi_l / R$$
\end{proof}
With the non-negative bounded constraint in \ref{sec:NNE}, a sufficient condition for Equation~\ref{eq:Horn rule score} to hold, is to further impose
\begin{align}
\label{eq:Horn rules}
    & Re(\textbf{r}_1 \times \textbf{r}_2 \times ... \times \textbf{r}_k) / R^k \leq Re(\textbf{r}) / R, \nonumber \\
    & Im(\textbf{r}_1 \times \textbf{r}_2 \times ... \times \textbf{r}_k) / R^k = Im(\textbf{r}) / R
\end{align}
where $\textbf{r}_i$ is the vectorized representation of $\textit{r}_i$. We further introduce the level of confidence and slackness in Equation~\ref{eq:Horn rules} to model approximate composition rules, which yields
\begin{align}
\label{eq:Horn rules-confidence term}
    & \lambda (Re(\textbf{r}_1 \times \textbf{r}_2 \times ... \times \textbf{r}_k) / R^k - Re(\textbf{r}) / R) \leq \alpha, \nonumber \\
    & \lambda (Im(\textbf{r}_1 \times \textbf{r}_2 \times ... \times \textbf{r}_k) / R^k - Re(\textbf{r}) / R)^2 \leq \beta
\end{align}

\subsection{Optimization Objective} \label{sec:objective}
We combine together the basic embedding model of ComplEx, the non-negative bounded constraint on both entity and relation representations and the approximate injection of Horn rules as InjEx. The overall model is presented as follows:
\begin{align}
\label{eq:optimization objective-1}
    \min_{\Theta, \{\alpha_1, \beta_1, \alpha_2, \beta_2 \}}& \sum_{\mathcal{D+\cup D^-}}{log(1+exp(-y_{ijk}\phi(e_i, r_k, e_j)))}  \nonumber \\
    & + \mu\sum_{\mathcal{T}}{\textbf{1}^\top (\alpha + \beta)} + \eta||\Theta||_3^3,   \nonumber \\
    s.t.\ 
    & \lambda (Re(\textbf{r}_1 \times ... \times \textbf{r}_k) / R^k - Re(\textbf{r}) / R) \leq \alpha, \nonumber \\
    & \lambda (Im(\textbf{r}_1 \times ... \times \textbf{r}_k) / R^k - Re(\textbf{r}) / R)^2 \leq \beta,   \nonumber \\
    & \alpha, \beta \geq 0, \nonumber \\
    & \forall r_1 \land ... \land r_k \xrightarrow{\lambda} r \in \mathcal{T}   \nonumber \\
    & 0 \leq Re(\textbf{e}), Im(\textbf{e}) \leq 1, \forall e \in \mathcal{E}\nonumber\\
    & 0 \leq Re(\textbf{r}), Im(\textbf{r}), 0 \leq |\textbf{r}| \leq R, \forall r \in \mathcal{R}
\end{align}
Here, $\Theta := \{\textbf{e}: e \in \mathbb{E}\} \cup \{\textbf{r}: r \in \mathbb{R}\}$ is the set of all entity and relation representations; $\mathcal{D}+$ and ${\mathcal{D}^-}$ are the sets of positive and negative training triples respectively; $\mathcal{T}$ is the rule set of Horn rules. The first term of the objective function is a typical logistic loss as in ComplEx. The second term is the sum of slack variables, used to inject Horn rules with penalty coefficient $\mu \geq 0$. We encourage the slackness to be as small as possible to better satisfy the rules. The last term is N3 regularization~\cite{lacroix2018canonical} to avoid overfitting. 

To solve the optimization problem, we convert slack variable for Horn rules into penalty terms and add to the original objective function and rewrite Equation~\ref{eq:optimization objective-1} as:
\begin{align}
\label{eq:optimization objective-2}
    \min_{\Theta, \{\alpha_1, \beta_1, \alpha_2, \beta_2 \}}& \sum_{\mathcal{D+\cup D^-}}{log(1+exp(-y_{ijk}\phi(e_i, r_k, e_j)))}  \nonumber \\
    & + \mu \sum_{\mathcal{T}}{\lambda \textbf{1}^\top [Re(\textbf{r}_1 \times ... \times \textbf{r}_k) / R^k - Re(\textbf{r}) / R]_+} \nonumber \\
    & + \mu \sum_{\mathcal{T}}{\lambda \textbf{1}^\top (Im(\textbf{r}_1 \times ... \times \textbf{r}_k) / R^k - Re(\textbf{r}) / R))^2}    \nonumber \\
    & + \eta||\Theta||_3^3,     \nonumber \\
    s.t.\ 
    & 0 \leq Re(\textbf{e}), Im(\textbf{e}) \leq 1, \forall e \in \mathcal{E}   \nonumber\\
    & 0 \leq Re(\textbf{r}), Im(\textbf{r}), 0 \leq |\textbf{r}| \leq R, \forall r \in \mathcal{R}
\end{align}
where $[\textbf{v}]_+ = max(\textbf{0}, \textbf{v})$, where $max(\cdot, \cdot)$ is an element-wise operation and $\lambda$ is the confidence of a corresponding rule. We use AdaGrad~\cite{duchi2011adaptive} as our optimizer. After each gradient descent step, we project entity and relation representations into $[0, 1]^d$ and $[0, R]^d$, respectively. When training, we set the norm boundary for relations \textit{R} = 1.

While forming a better structured embedding space, Injecting multiple types of rules only have small impact on model complexity. InjEx has the same $O(nd + md)$ space complexity as ComplEx, where $n$ is the number of entities, $m$ is the number of relations and d is the dimension of complex-valued vector for entity and relation representations. The time complexity of InjEx is $O((\overline{t} + p + \overline{n})d)$ where $\overline{t}$ is the average number of triples in each batch, $\overline{n}$ is the average number of entities in each batch and $p$ is the total number of rules we inject. In a real scenario, the number of rules is usually much smaller than the number of triples, i.e. $\overline{t} >> p$ and $\overline{n} \leq \overline{t}$. Thus, the time complexity of InjEx is also $O(\overline{t}d)$, which is the time complexity of ComplEx.

\section{Experiments} \label{sec:Experiments}
In this section we evaluate InjEx on two tasks, KGC and FKGC and report state-of-the-art results. The result on KGC task show that InjEx improve the KG embedding while the result on FKGC task confirm that InjEx uses the prior knowledge in prediction. 

In our experiment, We limit the maximum length of rules to 2 for the efficiency and quality of mining valid rules. Thus, rules are classified into two types according to their length: (1) The set of length-1 rules are hierarchy rules. (2) The set of rules with length 2 are composition rules. For ablation study, we report the performance of only injecting composition rules (InjEx-C) or hierarchy rules (InjEx-H).

\subsection{Knowledge Graph Completion} \label{sec:KGC exps}
\subsubsection{\textbf{Datasets}} We evaluate the effectiveness of our InjEx for link prediction on three real-world KGC benchmarks: FB15k-237, WN18RR~\cite{bordes2013translating} and YAGO3-10~\cite{mahdisoltani2014yago3}. FB15k-237 excludes inverse relations from FB15k and includes 14541 entities, 237 relations and 272,155 training triples. WN18RR is constructed from WordNet~\cite{miller1995wordnet} with 40,943 entities, 11 relations and 93,013 triples. YAGO3-10~\cite{mahdisoltani2014yago3} is a subset of YAGO, containing 123,182 entities, 37 relations and 1,079,040 training triples with almost all common relation patterns.

\subsubsection{\textbf{Baselines}}We compare InjEx with KGE models: TransE~\cite{bordes2013translating}, RotatE~\cite{sun2018rotate}, ComplEx-N3~\cite{lacroix2018canonical}, BoxE~\cite{abboud2020boxe}, and  HAKE~\cite{zhang2020learning}; 
Rule-based or rule injection models: DRUM~\cite{sadeghian2019drum}, SAFRAN~\cite{ott2021safran}, RUGE~\cite{guo2018knowledge}, KALE~\cite{guo-etal-2016-jointly},  ComplEx-NNE~\cite{ding2018improving} and UniKER~\cite{cheng-etal-2021-UniKER}; GNN/GCN model: pLogicNet~\cite{qu2019probabilistic}, RNNLogic~\cite{qu2020rnnlogic}, and RED-GNN~\cite{zhang2022knowledge}. 

\subsubsection{\textbf{Rule Set}} We generate our rule set via AnyBURL to all the aforementioned benchmarks. We further extract length-1 and length-2 Horn rules with confidence higher than 0.5. As such, we extract 2552/10/172 hierarchies and 149/10/65 compositions for FB15k-237/WN18RR/YAGO3-10, respectively.

\subsubsection{\textbf{Experiment Setup}}We report two common evaluation metrics for both tasks: mean reciprocal rank (MRR), and top-10 Hit Ratio (Hit@10). For each triple in the test set, we replace its head entity $e_o$ with every entity $e_i \in \mathcal{E}$ to create candidate triples in the link prediction task. We evaluate InjEx in a filtered setting as mentioned in~\cite{bordes2013translating}, where all corrupted triples that already exist in either training, validation or test set are removed. All candidate triples are ranked based on their prediction scores. Higher MRR or Hit@k indicates better performance. 

We ran the same grid of hyper-parameters for all models on the FB15K-237, WN18RR, NELL-One and FB15k-237-Zero datasets. Our grid includes a learning rate $\gamma \in \{0.1, 0.2, 0.5\}$, two batch-sizes: 25 and 100, and regularization coefficients $\eta \in \{0, 0.001, 0.005, 0.01, \\0.05, 0.1, 0.5\}$. On YAGO3-10, we limited InjEx to embedding sizes $\textit{d} = 1000$ and used batch-size 1000, as ~\cite{abboud2020boxe} only reports their result with $\textit{d} \leq 1000$, keeping the rest of the grid fixed. All experiments were conducted on a shared cluster of NVIDIA A100 GPUs. 

We train InjEx for 100 epochs, validating every 20 epochs. For all models, we report the best published results. The best results are highlighted in bold. 

For ComplEx-NNE, we experiment on FB15k-237 with the same rule set as InjEx-C, which are not reported in the original paper. Other baselines results are taken from~\cite{sun2018rotate},~\cite{abboud2020boxe}~\cite{lacroix2018canonical}, ~\cite{ott2021safran}, and  ~\cite{cheng-etal-2021-UniKER}.

\begin{table*}
  \caption{KGC results (MRR, Hits@10) for InjEx and competing approaches on FB15k-237 and YAGO3-10. We re-run ComplEx-NNE on FB15k-237 and report the result. Other approach results are as best published.}
  \label{KGC-result1}
  \centering
  \begin{tabular}{lllllll}
    \toprule
    &\multicolumn{2}{c}{FB15k-237}  &\multicolumn{2}{c}{WN18RR}   &\multicolumn{2}{c}{YAGO3-10}   \\
    \cmidrule(r){2-3} \cmidrule(r){4-5} \cmidrule(r){6-7}
    Model       & MRR   & Hit@10    & MRR   & Hit@10    & MRR   & Hit@10    \\
    \midrule
    TransE~\cite{bordes2013translating}     & .332 & .531       & .226  & .501  & .501  & .673  \\
    RotatE~\cite{sun2018rotate}             & .338  & .533      & .476  & .571  & .498  & .670  \\
    ComplEx-N3~\cite{lacroix2018canonical}  & .370  & .560      & .48  & .57    & .580  & .710  \\
    BoxE~\cite{abboud2020boxe}              & .337  & .538      & .451  & .541  & .567  & .699  \\
    HAKE~\cite{zhang2020learning}           & .346  & .542      & .497  & .582  & .545  & .694  \\
    \cmidrule(r){1-7}
    DRUM~\cite{sadeghian2019drum}           & .343  & .516      & .486  & .586  & -  & -  \\
    SAFRAN~\cite{ott2021safran}             & .389  & .537      & .502  & .578  & .564  & .693  \\
    \cmidrule(r){1-7}
    RUGE~\cite{guo2018knowledge}            & .191  & .376      & .280  & .327  & -  & -  \\
    KALE~\cite{guo-etal-2016-jointly}       & .230  & .424      & .172  & .353  & -  & -  \\
    ComplEx-NNE~\cite{ding2018improving}$^*$ & .373  & .555     & .481  & .580  & .590  & .721  \\
    UniKER-RotatE~\cite{cheng-etal-2021-UniKER} & \textbf{.539}  & .612  & .492  & .580 & -  & -    \\
    \cmidrule(r){1-7}
    pLogicNet~\cite{qu2019probabilistic}    & .332  & .524      & .441  & .537  & -  & -    \\
    RNNLogic~\cite{qu2020rnnlogic}          & .349  & .533      & .513  & .597  & -  & -    \\
    RED-GNN~\cite{zhang2022knowledge}       & .374  & .558      & \textbf{.533}  & \textbf{.624} & -  & -  \\
    \cmidrule(r){1-7}
    InjEx-H                                 & .390  & .560      & 0.482  & 0.580    & .632 & .754   \\
    InjEx-C                                 & .408  & .598      & 0.481  & 0.581    & .610 & .751   \\
    InjEx                                   & .420  & \textbf{.615} & 0.483  & 0.588    & \textbf{.660} & \textbf{.761}   \\
    \bottomrule
  \end{tabular}
\end{table*}

\subsubsection{\textbf{Main Results}} Table~\ref{KGC-result1} presents our results on FB15k-237, WN18RR and YAGO3-10. For FB15k-237, by injecting definite Horn rules, InjEx outperforms the state-of-art on Hits@10. Note, InjEx surpasses all embedding-standalone and rule-mining models, showing the effectiveness of rule injection. Strong performance on FB15k-237, which originally contains several composition patterns, suggests that InjEx can perform well with compositions even they are not explicitly inferred. 

For WN18RR, InjEx achieves competitive performance against all embedding-standalone and rule-mining models. It worth noting that WN18RR only has 11 relations which limits the quality and amount of inference patterns. With such constraint, InjEx still outperforms all the basic KGE models and most of representative methods on combining logical ruls and embedding models.

For YAGO3-10, InjEx outperforms all state-of-the-art models, with significant improvement over RotatE, BoxE, HAKE and SAFRAN. The result is encouraging since YAGO3-10 is more complex than FB15k-237 and WN18RR, because of the various combinations of different types of inference patterns.  Since~\cite{abboud2020boxe} and~\cite{song2021rot} both have good pattern inference, the strong performance of InjEx on FB15k-237 and YAGO3-10 indicates that InjEx can more efficiently capture different types of patterns. 

Overall, InjEx is competetive on all benchmarks and is state of the art on YAGO3-10. Especially, it perform better than (or at least equally well as) ComplEx-N3, ComplEx-NNE in both matrics on all three benchmarks. It shows that with the soft constraint and penalty-term injection, InjEx does improve the quality of KG embedding. Hence, its is a effective and strong model that leverage prior knowledge for KGC on large real-world KGs. 

\subsubsection{\textbf{Ablation Studies}} Table~\ref{KGC-result1} also presents the performance of InjEx-C and InjEx-H on FB15k-237, WN18RR and YAGO3-10. The performance is under the same parameter setting of the reported InjEx, which means we did not fine-tune for one type of relation. 

For FB15k-237 and WN18RR, by injecting either hierarchy rules or composition rules, InjEx-H and InjEx-C both achieves competitve performance. For YAGO3-10, both InjEx-H and InjEx-C outperform all state-of-the-art model. Further, the results of InjEx-H and InjEx-C against ComplEx-N3 show that InjEx is able to separately leverage two types of patterns to improve the quality of KG embeddings.  

The result shows that different KGs have different distributions on the type of rules and InjEx is able to effectively used both composition and hierarchy rules, either separately or together, to improve the quality of KG embeddings.

\subsection{Few-shot Knowledge Graph Completion} \label{sec:FKGC exps}
\subsubsection{\textbf{Dataset}} We evalute InjEx on two FKGC benchmarks:NELL-One~\cite{xiong2018one} and FB15k-237-Zero. NELL-One~\cite{xiong2018one} is generated from NELL~\cite{carlson2010toward} by removing inverse relations.It contains 68,545 entities, 358 relations and 181,109 triples. 51/5/11 relations are selected as task relations for training, validation and testing. Each task relation has only one triple in the corresponding set. FB15k-Zero is a dataset constructed from FB15k-237 following the same setting of NELL-One. We randomly select 8 relations as the task relations in the test set. We extract all triples with task relations from FB15k-237 as our test set. We randomly add 0, 1, 3 and 5 triples for each task relation into the training set and remove those from the test set to evaluate the effectiveness of InjEx on leveraging prior knowledge in different few-shot settings.

\subsubsection{\textbf{Baselines}} We compare InjEx with FKGC models GMatching~\cite{xiong2018one} and MetaR~\cite{chen2019meta}, KGC models TransE~\cite{bordes2013translating}, ComplEx~\cite{trouillon2016complex} and Dismult~\cite{yang2015embedding} on NELL-One and with ComplEx-N3~\cite{lacroix2018canonical} on FB15k-237-Zero. 

\begin{table*}
  \caption{FKGC experiment results on NELL-One with 1-shot setting}
  \label{FKGC-result1}
  \centering
  \begin{tabular}{cccc|cc|ccc}
    \toprule
    & TransE    & DisMult & ComplEx & GMatching & MetaR   & InjEx-H & InjEx-C & InjEx\\
    \midrule
    MRR     & .105  & .165  & .179  & .185  & \textbf{.250} & .240  & .203  & .245  \\
    \midrule
    Hit@10  & .226  & .285  & .239  & .279  & .261  & .304 & .267 &  \textbf{.320}  \\
    \bottomrule
\end{tabular}
\end{table*}

\subsubsection{\textbf{Rule Set}} To get Horn rules for FB15K-237-Zero, we select only those from FB15K-237 rule set, with task relations in FB15k-237-Zero as its head relation, obtaining 209 hierarchy rules and 11 composition rules. For NELL-One, we use AnyBURL in the same fashion as in the KGC task to extract Horn rules. Since a small number of composition rules with high confidence compared to hierarchy rules are found for NELL-One, we include all the composition rules but only hierarchy rules with confidence $\geq 0.8$, resulting in 3023 hierarchies and 38 compositions for NELL-One. 

\subsubsection{\textbf{Experiment Setup}} We evaluate InjEx on 1-shot setting for NELL-One, providing 1 triple for each target relation in the training set. To leverage the supporting triples in ComplEx-N3 and InjEx, we use the supporting triples in the training phase. On FB15k-237-Zero, we evaluate from 0-shot to 5-shot setting on both ComplEx-N3, InjEx-H, InjEx-C. We report the MRR and Hits@10 for both tasks. 

We ran the same grid of hyper-parameters for all models on the NELL-One and FB15k-237-Zero datasets. Our grid includes a learning rate $\gamma \in \{0.1, 0.2, 0.5\}$, two batch-sizes: 25 and 1000, and regularization coefficients $\eta \in \{0, 0.001, 0.005, 0.01, 0.05, 0.1, 0.5\}$. Other training settings are the same as in the KGC tasks. 

\begin{table*}
  \caption{FKGC results (MRR, Hits@10) on FB15k-237-Zero with k-shot settings}
  \label{FKGC-result2}
  \begin{tabular}{lllllllll}
    \toprule
    &\multicolumn{2}{c}{0-shot} &\multicolumn{2}{c}{1-shot} &\multicolumn{2}{c}{3-shot} &\multicolumn{2}{c}{5-shot} \\
    \cmidrule(r){2-3} \cmidrule(r){4-5} \cmidrule(r){6-7} \cmidrule(r){8-9}
    FB15k-237-Zero      & MRR   & Hit@10    & MRR   & Hit@10    & MRR   & Hit@10    & MRR   & Hit@10    \\
    \midrule
    ComplEx-N3          & .0011 & .0016     & .127  & .180      & .134  & .181      & .162  & .243      \\
    \midrule
    InjEx-H      & .07   & .197      & .08  & .204       & .114  & .231       & .151  & .302       \\
    InjEx-C    & \textbf{.123} & \textbf{.224}  & \textbf{.146}  & \textbf{.247}    & \textbf{.158}  & \textbf{.252}   & \textbf{.207}  & \textbf{.325}  \\
    \bottomrule
  \end{tabular}
\end{table*}

\subsubsection{\textbf{Results}}
Table~\ref{FKGC-result1} summarizes our results on NELL-One. InjEx outperforms all traditional KGC approaches, highlighting that prior rules improve the KG embedding of few-shot relations. InjEx-H (with hierarchy rules) also achieves significant improvement compared to GMatching and is competitive with MetaR. Although GMatching considers the neighborhood information in their model, the lack of learning other patterns that exist in the original KG limits their performance. In contrast, InjEx is capable of leveraging information provided by various types of patterns.

Table\ref{FKGC-result2} further illustrates InjEx effectively leveraging prior knowledge on the FKGC task. On FB15k-237-Zero, with 0 to 5 supporting triples provided for each task relation, InjEx-Composition consistently outperforms ComplEx-N3. Injecting hierarchy rules improves performance on Hits@10 for all settings but only for 0-shot on MRR. One possible reason is that Freebase does not have a clear ontology, effecting the quality of the hierarchy patterns. Furthermore, we observe that the gap between ComplEx-N3 and InjEx-C grows larger. This is against the intuition that as more triples are provided, prior knowledge may become less helpful since more information can be obtained from triples. This result demonstrates that prior rule knowledge can consistently provide a positive impact on the few-shot link predication tasks. 

We also observe that composition rules more positively impact FB15k-237-Zero, while hierarchy rules more positively impact NELL-One. This illustrates that different patterns have a diverse effect on supporting better few-shot relations, which motivates InjEx on injecting various types of rules. 

Further, on FB15k-237-Zero with 0 supporting triple (0-shot), the improvement between ComplEx-N3 and InjEx are statistically significant. With 0 supporting triples in training set, such prior knowledge can only be captured from the rules. The improvement indicates that InjEx is able to effectively make use of prior knowledge during prediction.

\begin{figure}
  \begin{subfigure}{\linewidth}
  \includegraphics[width=.5\linewidth]{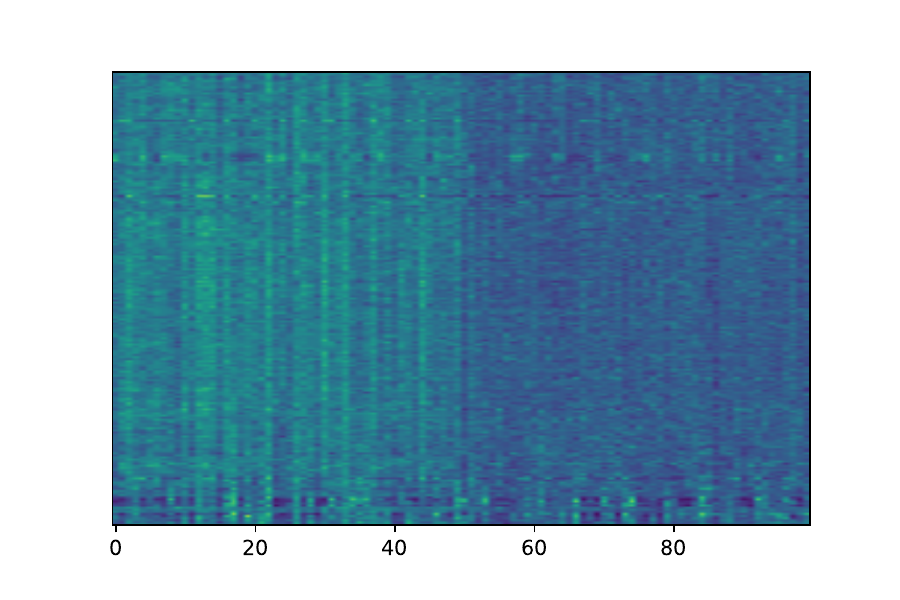}\hfill
  \includegraphics[width=.5\linewidth]{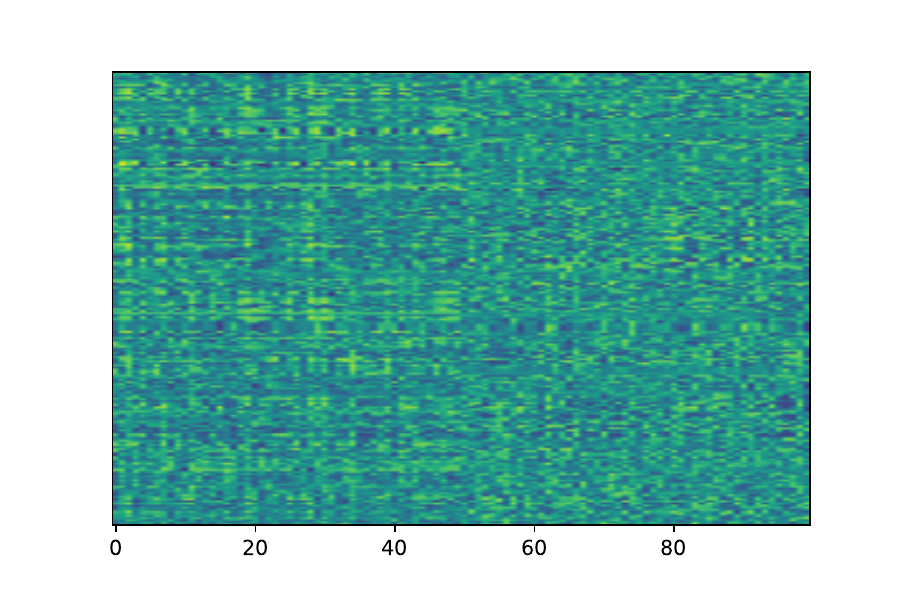}\hfill
  \caption{Entity representations}
  \label{fig:entity representation}
  \end{subfigure}\par\medskip
  \begin{subfigure}{\linewidth}
  \includegraphics[width=.5\linewidth]{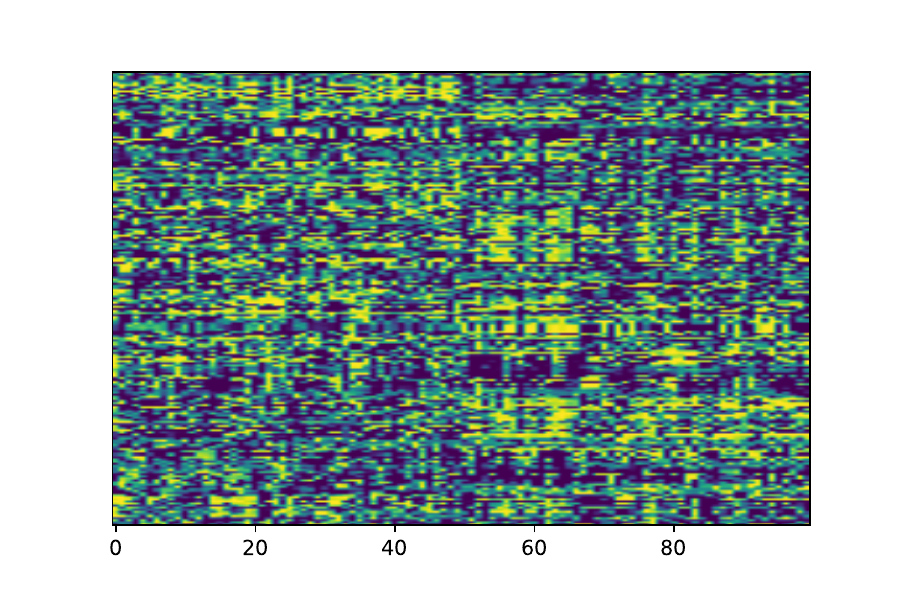}\hfill
  \includegraphics[width=.5\linewidth]{figures/interpret/ComplEx-rel.pdf}\hfill
  \caption{relation representations}
  \label{fig:relation representation}
  \end{subfigure}
  \caption{Visualization of "active" dimensions in entity and relation representations (rows) learned by InjEx (left) and ComplEx-N3 (right) on FB15k-237. Values range from 0 (purple) to 1 (yellow). Best viewed in color}
  \label{fig:embeddings-all}
\end{figure}

\begin{figure}
  \begin{subfigure}{\linewidth}
  \includegraphics[width=.5\linewidth]{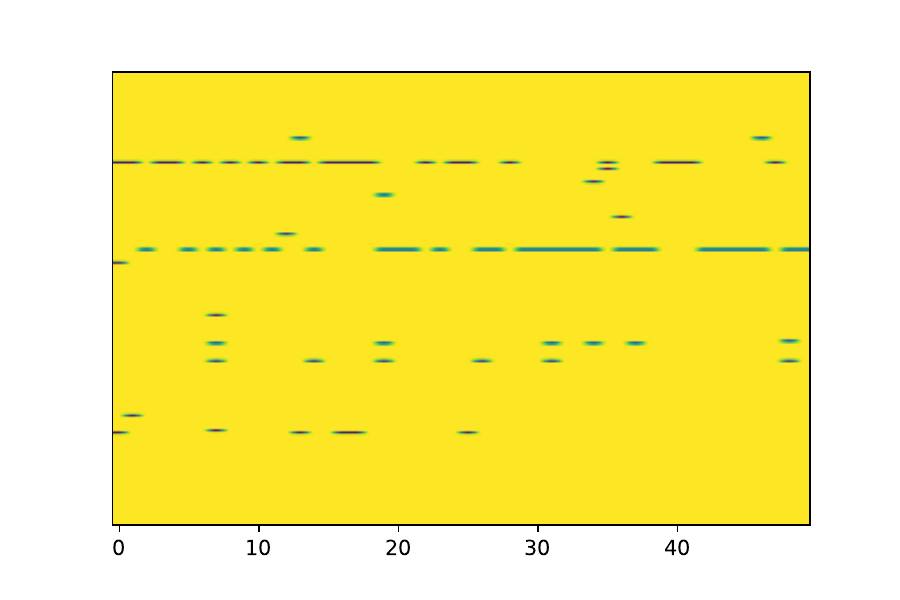}\hfill
  \includegraphics[width=.5\linewidth]{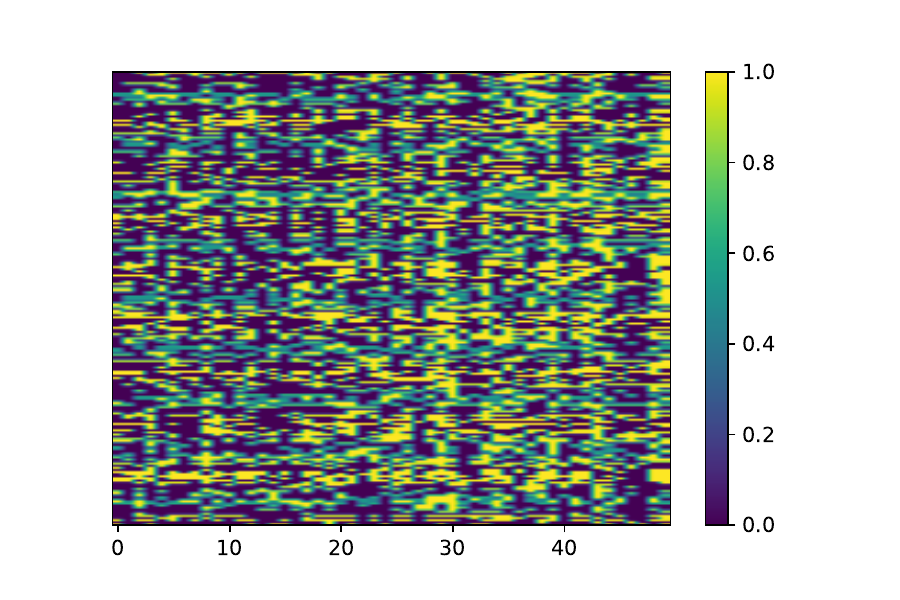}\hfill
  \caption{Difference between real components}
  \label{fig:difference of real Horn}
  \end{subfigure}\par\medskip
  \begin{subfigure}{\linewidth}
  \includegraphics[width=.5\linewidth]{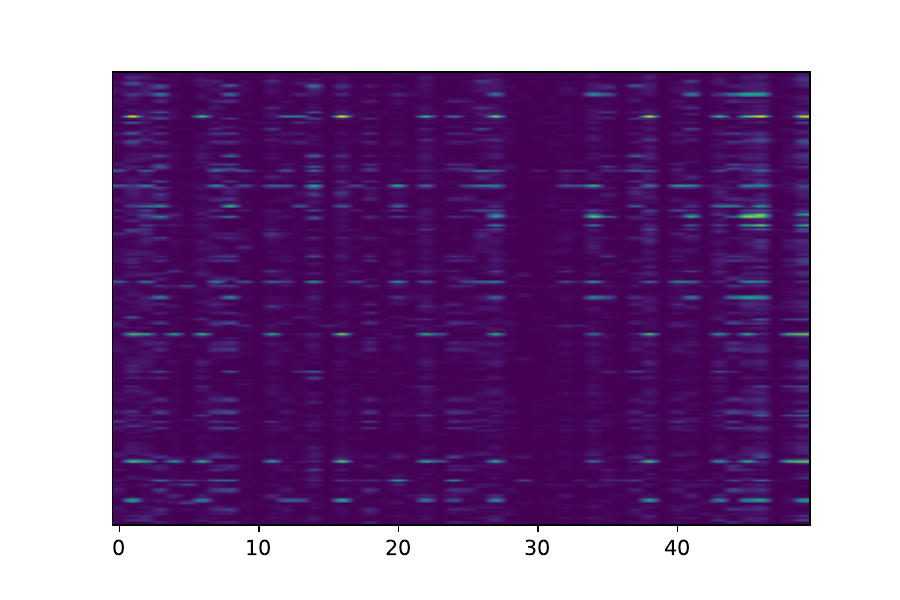}\hfill
  \includegraphics[width=.5\linewidth]{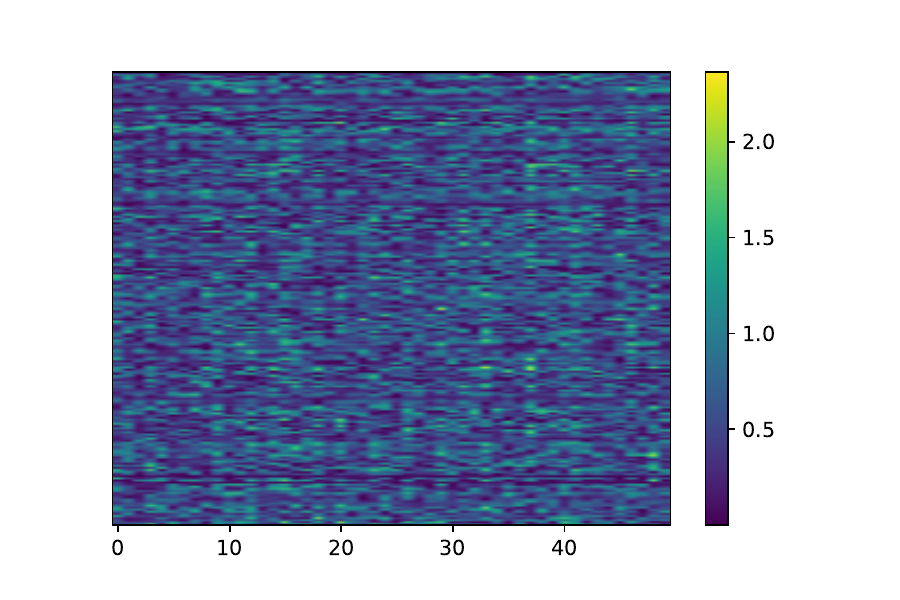}\hfill
  \caption{Difference between imaginary components}
  \label{fig:difference of imaginary Horn}
  \end{subfigure}
  \caption{Visualization of differences between real, imaginary components of head and body relation(s) representations learned by InjEx with Horn rule set (left) and ComplEx-N3 (right). Values range from low (purple) to high (yellow). Best viewed in color.}
  \label{fig:rel-embeddings-all}
\end{figure}
In this section, we demonstrate further analysis with visual inspection of the entity and relation embedding space on how InjEx improves the quality of KG embeddings when the rules are imposed. 

\subsection{Analysis on Entity Representations}
This section demonstrates how the structure of the entity embedding space changes when the constraints are imposed. We provide the visualization of all entity representations on FB15k-237. Figure\ref{fig:entity representation} shows the general distribution of all entity embeddings. We observe that the highlighted dimensions are less random comparing with ComplEx-N3. This shows that with the constraints and rules we impose, InjEx learns to focus on representative dimensions for different entities, which leads to a more efficient representation compared with ComplEx.

\subsection{Analysis on Relation Representations}
We also provide the visualization of relation representations on FB15k-237. On this dataset, each relation is associated with a single type label. To show the norm constraints and loss term we add affect the relation representations, in Figure\ref{fig:relation representation} we present the general distribution of all relation embeddings. The result indicates that InjEx obtains compact and interpretable representations for relations. Fewer dimensions are activated in InjEx compared with ComplE-N3, illustrating InjEx is more efficient in capturing representative knowledge from the background KG. 

Figure \ref{fig:rel-embeddings-all} visualizes the representations of all relations included in the Horn rule set, learned by ComplEx-N3 and InjEx. We randomly pick 50 dimensions from both their real and imaginary components. 

For real components, with Horn rules injected, we expect the representation of the head relation to be larger than or equal to (element-wise multiply of) the body relation(s). In  Figure\ref{fig:difference of real Horn}, we check whether the previous statement holds for all the rules. The result shows that with InjEx, the representations of relations do follow the guidance of in Eq\ref{eq:composition rules} and Eq\ref{eq:Horn rules} while representations learned by ComplEx-N3 are rather arbitrary. 

For imaginary components, Horn rules tend to force head relation(s) to be similar to the (element-wise multiply of) body relation(s). Figure \ref{fig:difference of imaginary Horn} shows that the constraints and loss term in InjEx guide the learning of relation representations, shrinking the gap between head and body relation(s) compared with ComplEx-N3. 

\begin{figure}
  \begin{subfigure}{\linewidth}
  \includegraphics[width=\linewidth]{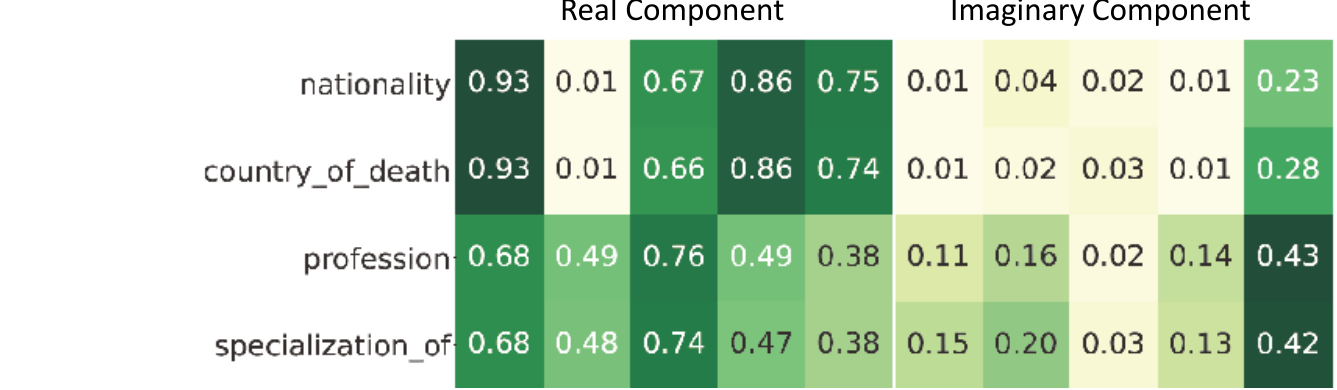}\hfill
  \caption{Examples of relation representations of hierarchy rules}
  \label{fig:hierarchy examples}
  \end{subfigure}\par\medskip
  \begin{subfigure}{\linewidth}
  \includegraphics[width=\linewidth]{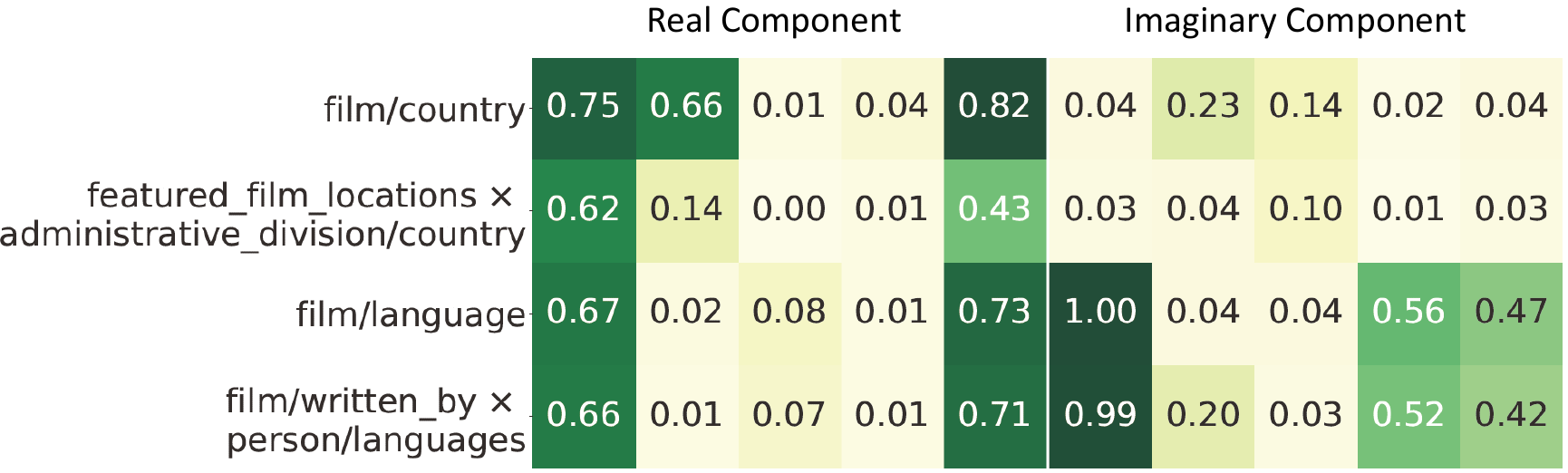}\hfill
  \caption{Examples of relation representations of composition rules}
  \label{fig:composition examples}
  \end{subfigure}
  \caption{Relation representations learned by InjEx. (a) Example of hierarchy rules. (b) Example of composition rules}
  \label{fig:rule-examples-all}
\end{figure}

To further show how InjEx affects the learning of relation representations, We visualize the representations of two pairs of relations from hierarchy rules and two pairs of relations from composition rules learned by InjEx in Figure \ref{fig:hierarchy examples} and Figure\ref{fig:composition examples}. For each relation, we randomly pick 5 dimensions from both its real and imaginary components. We present:
\begin{align*}
    /people/person/&nationality \Leftarrow \nonumber \\ 
    & /people/deceased\_person/place\_of\_death \nonumber \\
     /people/person/&profession \Leftarrow \nonumber \\ 
    & /people/profession/specialization\_of 
\end{align*} 
\begin{align*}
    /film/film/country \Leftarrow &/film/film/featured\_film\_locations \land \nonumber \\ &/location/administrative\_division/country \nonumber \\
    /film/film/language \Leftarrow & /film/film/written\_by \land \nonumber \\ 
    & /people/person/languages 
\end{align*} 
By imposing such Horn rules, these relations can encode such logical regularities quite well. InjEx learn to force similar representations for head and body relations to follow $Re(\textbf{r}_1 \times \textbf{r}_2 \times ... \times \textbf{r}_k) / R^k \leq Re(\textbf{r}) / R$, $Im(\textbf{r}_1 \times \textbf{r}_2 \times ... \times \textbf{r}_k) / R^k \approx Im(\textbf{r}) / R$ for Horn rules, thus, impose prior knowledge into relation embeddings.

\section{Conclusion and future work} \label{sec:future-work}
In this paper we present InjEx, and proved its expressive power and ability to inject different types of rules. We empirically showed that InjEx achieves stae-of-art performance for KGC and is competitive for few-shot KGC by injecting different types of rules. InjEx can be further updated and improved by leveraging additional types of rules for both tasks. Because the results indicate that Horn rules with different length are all valuable for KGC, it is worth exploring further interactions among other rule combinations in order to maximize the possible impact of various types of rules. 

\bibliographystyle{ACM-Reference-Format}
\bibliography{ref}

\appendix
\section{Proof of THEOREM~\ref{thm:composition}} \label{prf: constraint proof}
\begin{proof} 

With the restrictions: $|r_i| \in (0,R]$ and $Re(e_i), Im(e_i) \in (0, 1]$, for every dimension $l$ we have
\begin{align*}
    \varphi_{1l}&:=\varphi_l(e_1, r_1, e_2)\\
    &=|(|r_{1l}||e_{1l}||e_{2l}|cos(\theta_{r_{1l}} + \theta_{e_{1l}} - \theta_{e_{2l}}))| \in [0, 2R]\\
    \varphi_{2l}&:=\varphi_l(e_2, r_2, e_3)\\
    &=|(|r_{2l}||e_{2l}||e_{3l}|cos(\theta_{r_{2l}} + \theta_{e_{2l}} - \theta_{e_{3l}}))| \in [0, 2R]
\end{align*}
which means we can treat $\Pi_l(|\varphi_{il}/(2R)|)$ like the probability of triple $i$ being True. Naturally, for rule $r_1, r_2 \rightarrow r_3$ to hold, it is sufficient if we prove that for each dimension $l$,
\begin{align}
\begin{split}
\label{prob-goal}
    |\varphi_{3l} / (2R)| &:= |(|r_{3l}||e_{1l}||e_{3l}|cos(\theta_{r_{3l}} + \theta_{e_{1l}} - \theta_{e_{3l}}))/(2R)|\\
                    &\geq |\varphi_{1l}/(2R)| * |\varphi_{2l}/(2R)|
\end{split}
\end{align}

We will prove the above (\ref{prob-goal}) in two steps (the proof is element-wise, for simplicity we drop footnote $l$ in all notations). Let $r := \langle r_1, r_2 \rangle / R $, we will show:
\begin{align}
    \label{prf-step1}
    &|\varphi_1/(2R)| * |\varphi_2/(2R)| \stackrel{?}{\le} |\varphi_r/(2R)|\\
    \label{prf-step2}
    &|\varphi_r/(2R)| \stackrel{?}{\le} |\varphi_3/(2R)|
\end{align}

Let $\alpha := \theta_{r_1} + \theta_{e_1} - \theta_{e_2}$, $\beta := \theta_{r_2} + \theta_{e_2} - \theta_{e_3}$, and for $r = \langle r_1, r_2 \rangle/R$, we have $\theta_r$ = $\theta_{r_1}$ + $\theta_{r_2}$.

First we prove (\ref{prf-step1}). When triples $(e_1, r_1, e_2)$ and $(e_2, r_2, e_3)$ are both True, $\varphi_1$ and $\varphi_2$ reach maximum value, which means $\alpha \approx \beta \approx 0$. 

Notice that for $\varphi_r$, $\theta_r + \theta_{e_1} - \theta_{e_3} = \theta_{r_1} + \theta_{r_2} + \theta_{e_1} - \theta_{e_3} = \alpha + \beta \approx 0$. Then~\ref{prf-step1} becomes $|e_1||e_2|^2|e_3| / 2 \leq |e_1||e_3|$, which holds given that $|e_i|^2 \leq 2$.

To prove (\ref{prf-step2}) we only need to show $|\varphi_r| \le |\varphi_3|$, which according to \cite{ding2018improving} is valid as long as they are positive.
\end{proof}


\end{document}